\newtheorem{proposition}{Proposition}          
\newtheorem{assumption}{Assumption}            
\title{Uniform Loss vs. Specialized Optimization: A Comparative Analysis in Multi-Task Learning}
\author{%
  Gabriel S. Gama\\
  University of São Paulo\\
  \texttt{gabriel\_gama@usp.br} \\
  \And
  Valdir Grassi Jr.\\
  University of São Paulo\\
  vgrassi@usp.br \\
}
\begin{document}

\maketitle

\begin{abstract}
    Specialized Multi-Task Optimizers (SMTOs) balance task learning in Multi-Task Learning by addressing issues like conflicting gradients and differing gradient norms, which hinder equal-weighted task training. However, recent critiques suggest that equally weighted tasks can achieve competitive results compared to SMTOs, arguing that previous SMTO results were influenced by poor hyperparameter optimization and lack of regularization. In this work, we evaluate these claims through an extensive empirical evaluation of SMTOs, including some of the latest methods, on more complex multi-task problems to clarify this behavior. More specifically, we start our analysis by evaluating all SMTOs on a simple MNIST problem to identify the promising optimizers and assess them further on progressively more complex multi-task problems. Our findings indicate that SMTOs perform well compared to uniform loss and that fixed weights can achieve competitive performance compared to SMTOs. Furthermore, we demonstrate why uniform loss performs similarly to SMTOs in some instances. The source code is available at \url{https://github.com/Gabriel-SGama/UnitScal_vs_SMTOs}.
\end{abstract}

\section{Introduction}

When tackling complex real-world challenges in machine learning, one must contemplate the utility of Multi-Task Learning (MTL) \cite{Caruana1997}. Scenarios such as scene understanding, multilingual translation, and predicting chemical properties require multiple outputs, possibly making them promising candidates for MTL approaches due to their inherent inductive bias \cite{baxter2000model}.

Classical methods approach this problem by utilizing one model for each task, but most MTL scenarios have the potential to share information across tasks. For example, autonomous driving relies on multiple pieces of information extracted from the same visual input, such as semantic segmentation, object detection, instance segmentation, and others. Therefore, sharing information across those tasks is desirable to increase performance.

To efficiently share feature information across tasks, various approaches have been proposed, including model architecture designs \cite{misra2016cross, gao2019nddr, liu2019end}, task grouping strategies \cite{standley2020tasks, fifty2021efficiently}, Meta-Learning techniques \cite{hospedales2021meta}, and the focus of this work, Specialized Multi-Task Optimizers (SMTOs) \cite{sener2018multi, navon2022nashmulti}.

SMTOs are implemented together with a shared encoder architecture, where one common encoder outputs a feature space that is shared with the specific decoders used for each task \cite{sener2018multi, navon2022nashmulti}. Consequently, this reduces the computational cost when compared to the individual models approach. This unified model can be optimized by minimizing the sum of the losses, known as uniform loss. However, the simultaneous learning of multiple tasks introduces challenges such as conflicting directions and significant gradient norm disparities \cite{liu2021towards, javaloy2021rotograd}. SMTOs aim to address these challenges by applying mathematical tools to jointly and equally optimize all objectives.

SMTOs can be classified into two types: gradient-based and loss-based. The former compute the gradients of each task and combine them to find a suitable optimization direction \cite{sener2018multi, chen2018gradnorm, katrutsa2020follow, chen2020graddrop, liu2021towards, javaloy2021rotograd, navon2022nashmulti, nakamura2022leveraging}. This process adds complexity and computational cost to the learning problem as multiple backpropagation steps are needed. The loss-based methods only take into consideration the loss values \cite{kendall2017multi, liu2022autolambda, lin2022rlw, liu2023famo, shen2024go4align}, reaching computational times comparable to those of the uniform loss \cite{achituve2024bayesianUncGradAggr, shen2024go4align}.

One notable issue in the SMTO field is the lack of a standardized evaluation framework and procedure, leading to concerns about reproducibility \cite{xin2022current}. Understandably, some works have begun questioning the efficacy of SMTOs \cite{kurin2022defense, xin2022current}. These critiques attribute reported gains in SMTO papers to sub-optimal hyperparameter tuning or insufficient regularization. They provide various experiments, including natural language processing (NLP), computer vision, and reinforcement learning, as well as a theoretical hypothesis on why SMTOs apply a regularization effect and a comparison to a properly optimized baseline.

In this context, we propose to address these concerns by hypothesizing that proper hyperparameter optimization is crucial for fair performance assessment of SMTOs. Nevertheless, we also emphasize the significance of considering the inherent complexity of the MTL balancing problem. Our contributions include:

\begin{itemize}
    \item A comprehensive evaluation of current SMTOs throughout Section~\ref{sec:exp}, including recent ones that have not been covered in previous critiques, using the MNIST dataset \cite{lecun1998gradientLenet} as a starting point to identify promising SMTOs, followed by other commonly used datasets such as Cityscapes \cite{Cordts2016Cityscapes} and Quantum Chemistry and Molecular Physics (QM9) \cite{wu2018qm9}.
    \item An investigation into why equally weighted tasks can match the results of SMTOs. More specifically, the two similar tasks scenario in Section~\ref{sec:two_similar_tasks}, and a complexity analysis in Section~\ref{sec:complexity}.
    \item A re-evaluation of the hypothesis brought up by \cite{xin2022current} that fixed weights can match the same results achieved by SMTOs in Section~\ref{sec:fixed_weights}.
\end{itemize}

\section{Related Work}

In recent years, MTL and SMTOs have witnessed substantial research activity. Notable works, such as \cite{kendall2017multi, sener2018multi}, have garnered interest in MTL due to their performance improvements compared to the uniform loss and single task baselines. This surge has also given rise to numerous variations and novel SMTO methods, each improving on or proposing solutions to unaddressed concerns of early SMTOs \cite{katrutsa2020follow, liu2021conflict, javaloy2021rotograd, nakamura2022leveraging}. 

However, despite the growing body of research, the SMTO field faces a significant challenge: the absence of a formal benchmark. Consequently, reported results may vary between articles \cite{sener2018multi, xin2022current, navon2022nashmulti}, raising concerns about the improvements. Specifically, recent critiques have revealed potential issues, suggesting that the reported performance gains from SMTOs stem from limited regularization \cite{kurin2022defense} and under-optimized baseline hyperparameters \cite{xin2022current}.

\citet{kurin2022defense} show that all evaluated SMTOs are linked to a larger solution space, under-optimization or stochastic behavior, all of which induce a regularization effect \cite{dietterich1995overfitting, keskar2016large, kleinberg2018alternative}. They also evaluated their hypothesis by adding proper regularization such as weight decay, batch normalization \cite{ioffe2015batch}, dropout \cite{srivastava2014dropout}, and early stopping to the uniform loss, referenced as Unitary Scalarization (Unit. Scal.) in this paper, and compared it to SMTOs. After the changes, they achieved competitive results to SMTOs at a lower computational cost. 

Similarly, \citet{xin2022current} argue that reported SMTO gains may be attributed to poor hyperparameter optimization, with particular interest in the impact of learning rates compared to random seeds. Their work demonstrates that exploring multiple fixed weight values can construct a Pareto curve, and SMTOs can only obtain one of the possible solutions. Due to computational restrictions, the discussion regarding fixed weights will be partially addressed as, to the best of our knowledge, there is no method to predict those fixed weights accurately, and grid search becomes impractical for a higher number of tasks.

Our work distinguishes itself in two key ways. Firstly, we consider most of the recent SMTOs in our evaluation \cite{liu2021conflict, javaloy2021rotograd, navon2022nashmulti, liu2022autolambda, nakamura2022leveraging, liu2023famo}. Secondly, we adopt some variations to the commonly used problems to introduce more complexity, as detailed in Section~\ref{sec:exp}. Our objective is to demonstrate that SMTOs, mainly the recent ones, provide significant improvements over Unit. Scal. as the complexity increases. Additionally, we aim to elucidate the scenarios in which Unit. Scal. can match SMTOs and provide a complete evaluation of these promising SMTOs in multiple challenging MTL configurations.

\section{Experimental Evaluation}
\label{sec:exp}
Although numerous works utilize similar datasets and training configurations, a broadly accepted evaluation framework remains elusive. We have chosen specific problems from a selection of SMTO papers to address this challenge. More specifically, we restrict our evaluation to the supervised learning setting using datasets such as MNIST \cite{lecun1998gradientLenet}, Cityscapes \cite{Cordts2016Cityscapes}, and QM9 \cite{wu2018qm9}. To address recent critiques to SMTOs \cite{kurin2022defense, xin2022current} and ensure robustness, the models were trained with at least one form of regularization, except for the QM9 dataset, and the parameters were optimized by applying grid search. We determine the best hyperparameter configuration by training once with each value (excluding Multi-MNIST, which was trained five times for each value) and selecting the best validation result. The best combination is trained at least two more times, and the metrics reported here are from the test split. We consider the epoch in which the model achieved the highest validation metric to mitigate overfitting.

For comparative evaluation of different SMTOs, we used two widely employed metrics, the mean relative percentage gain metric $\Delta_{\text{mtm}}$ and the mean rank $MR$ \cite{vandenhende2021mtlsurvey, navon2022nashmulti, liu2023famo}. $\Delta_{\text{mtm}}$ measures the improvement achieved by the MTL model over single-task references, and the $MR$ evaluates how equally all tasks were optimized. Since tasks like semantic segmentation and disparity estimation use multiple metrics for evaluation, we adapt the equations to account for that. For each set of metrics $M_i$ for task $i \in  \mathcal{T}$, the single task reference $M_{b, i, j} \; \mid j \in M_i$ is compared to its multi-task counterpart $M_{\text{smto}, i, j} \; \mid j \in M_i$, so $\Delta_{\text{mtm}}$ is given by:
\begin{equation}
\Delta_{\text{mtm}} = \frac{1}{N}\sum_{i=1}^{N} \frac{1}{M_i} \sum_{j=1}^{M_i} (-1)^{l_{i,j}} \frac{M_{\text{smto},i,j} - M_{b,i,j}}{M_{b,i,j}},
\end{equation}
where $l_{i,j}$ is $1$ if a lower value is better and $0$ otherwise. Likewise, the mean rank is given by:
\begin{equation}
MR = \frac{1}{N}\sum_{i=1}^{N} \frac{1}{M_i} \sum_{j=1}^{M_i}R_{i,j},
\end{equation}
where $R_{i,j}$ is the rank of the SMTO on task $i$ with respect to the metric $j$. As a result, both metrics compute the mean within the same task and then the mean of every task, ensuring a fair evaluation in a multi-task setting.

From a wide array of publicly available SMTOs, we selected a subset for evaluation: UW \cite{kendall2017multi}, MGDA \cite{sener2018multi}, PCGrad \cite{yu2020pcgrad}, GradDrop \cite{chen2020graddrop}, EDM \cite{katrutsa2020follow}, IMTL \cite{liu2021towards}, CAGrad \cite{liu2021conflict}, RotoGrad \cite{javaloy2021rotograd}, Nash-MTL \cite{navon2022nashmulti}, SI \cite{navon2022nashmulti}, RLW \cite{lin2022rlw}, Auto-Lambda \cite{liu2022autolambda}, CDTT \cite{nakamura2022leveraging} and FAMO \cite{liu2023famo}. We highlight that there are still some SMTOs that could be evaluated in future works \cite{guo2018dynamic, chen2018gradnorm, lin2023scaleinv, lin2023dualbalancing, achituve2024bayesianUncGradAggr, shen2024go4align}.

Including in Unit. Scal. and two RLW variations (Normal and Dirichlet distributions) as done in previous works \cite{kurin2022defense}, we assessed a total of 16 SMTOs. To reduce computational cost, the SMTOs considered in this paper are filtered according to their performance in a regression and classification problem. Even though evaluating an SMTO with just one example is not a definitive quality measure, this two-task problem tests the method's ability to handle varying gradient norms. This concern motivated several SMTOs \cite{chen2018gradnorm, katrutsa2020follow, liu2021towards, navon2022nashmulti}. Based on the results and contributions from each SMTO, we select the most promising ones and proceed to test them further.

In this section, the chosen SMTOs are first evaluated in the specific case of two similar task problems to demonstrate that, despite being an MTL problem, Unit. Scal. approximates the behavior of the SMTOs and achieves competitive results. We also consider some factors that dictate the need for SMTOs by comparing Unit. Scal. to one of the SMTOs while evaluating different configurations with increasing multi-task learning balancing complexity, using the Cityscapes dataset. All promising SMTOs are then evaluated in the most challenging format. Furthermore, we provide a comparison on the QM9 dataset. Finally, we evaluate the performance of fixed weights. All hyperparameter combinations, SMTOs-specific hyperparameter optimization, and their best configuration are summarized in Appendix~\ref{app:hyper} for reproducibility.

\subsection{Selecting promising SMTOs for further evaluation}
\label{sec:mnist_filter}

In this paper, we tackle multiple MTL challenges to evaluate the performance of many SMTOs. However, due to the multitude of combinations and the necessity for hyperparameter optimization to ensure a fair evaluation \cite{kurin2022defense, xin2022current}, we apply a preliminary filter for SMTO selection based on the performance on the Multi-MNIST dataset \cite{sener2018multi}.

The Multi-MNIST dataset is a variation of the MNIST dataset for MTL where the digits are overlapped in the same image, one on the top left and one on the bottom right. This combined image is then used to predict information regarding the left and right digits. The most common variation is predicting the class of the left and right digits, as done in multiple works \cite{sener2018multi,yu2020pcgrad, javaloy2021rotograd, kurin2022defense, xin2022current}. Instead, we opt for the alternative suggested by \citet{nakamura2022leveraging}, where the model is trained to classify the top left digit and to reconstruct the bottom right digit. This choice was made because this variation adds more complexity, involving tasks with varying levels of difficulty and loss functions that result in gradients with different norms, an important factor when considering the specific case of two tasks, as shown in Section~\ref{sec:two_similar_tasks}. Those tasks will be referenced as CL, CR, RL, and RR for the classification (C) or reconstruction (R) of the left (L) or right (R) digit. When tasks are represented using the notation \{task 1\}\_\{task 2\}...\{task N\}, it indicates a multi-task scenario that simultaneously addresses tasks 1 through N.

For our model architecture, we use a variation of the Lenet model \cite{lecun1998gradientLenet} as \citet{nakamura2022leveraging}. We then add dropout layers to the encoder and decoder architectures for regularization. The hyperparameters are optimized by grid search, varying the learning rate $lr \in$ \{0.01, 0.075, 0.005, 0.0025, 0.001, 0.00075, 0.0005\} and dropout $p \in$ \{0.0, 0.1, 0.2, 0.3, 0.4, 0.5\}. Specific SMTO hyperparameters are also included in the search if necessary. We train each configuration five times and the best validation configuration five more times, giving a total of ten models for each SMTO.

\begin{figure}[ht]
  \centering
  \includegraphics[width=0.85\textwidth]{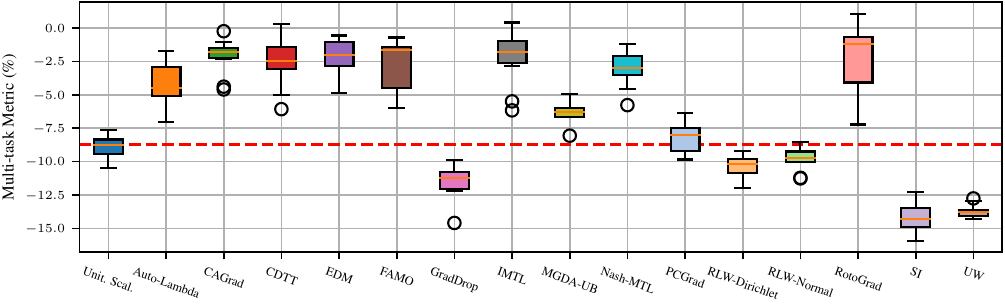}
  \caption{Box plot illustrating $\Delta_{\text{mtm}}$ scores on the Multi-MNIST classification and regression problem. The red line indicates the median value for Unit. Scal.. Notably, multiple SMTOs obtained substantial improvements over the Unit. Scal. baseline.}
  \label{fig:mnist_box_plot_CL_RR}
\end{figure}

The results, as presented in Figure~\ref{fig:mnist_box_plot_CL_RR}, demonstrate that Unit. Scal. achieves competitive results compared to some SMTOs. However, the majority outperform it significantly, primarily recent ones. Based on these results, we select EDM \cite{katrutsa2020follow}, IMTL \cite{liu2021towards}, CAGrad \cite{liu2021conflict}, RotoGrad \cite{javaloy2021rotograd}, Nash-MTL \cite{navon2022nashmulti}, Auto-Lambda \cite{liu2022autolambda}, CDTT \cite{nakamura2022leveraging} and FAMO \cite{liu2023famo} as promising SMTOs for further investigation.

Despite the Multi-MNIST dataset's simplicity, it effectively demonstrated the selected SMTOs' ability to provide a meaningful gradient direction even with varying gradient norms, enabling their distinction from other methods. Specifically, PCGrad and MGDA are known to emphasize gradients with the smallest and largest magnitudes, respectively \cite{liu2021towards}. GradDrop randomly drops gradients based on sign-purity, without considering magnitude balancing. Furthermore, methods like RLW, UW, and SI operate directly on loss values without estimating gradients, unlike FAMO and Auto-Lambda, potentially making them sensitive to differences in gradient magnitude. Ultimately, most of the selected SMTOs significantly outperformed Unit. Scal. in our evaluations.

\subsection{Two similar tasks}
\label{sec:two_similar_tasks}
The two-task setting is commonly used as a computationally efficient way of evaluating SMTOs, widely adopted in multiple MTL papers \cite{sener2018multi, kurin2022defense, xin2022current}. However, when considering the two-task setting, it is possible that Unit. Scal. approximates the behavior of an SMTO given a few conditions. 

When considering a problem with a shared encoder, the same decoder architecture, the same loss function, task labels with similar distributions, and learning speeds, the gradient norm and the loss value of both tasks are expected to behave in remarkably similar ways. Additionally, if we restrict this assumption to equal gradient norm or equal loss values, it is possible to theoretically show that most SMTOs converge to equal weights (see Appendix~\ref{app:proofs}).

To provide empirical evidence for the above statement, we consider two problems from Multi-MNIST: dual classification and dual reconstruction of the digits. In both cases, the architecture initially described in the previous section was reused with the corresponding changes to the decoders. We follow the same hyperparameter optimization previously described for Multi-MNIST. 

\begin{figure}[htbp]
    \centering
    \begin{subfigure}[b]{0.8\textwidth}
        \centering
        \includegraphics[width=\textwidth]{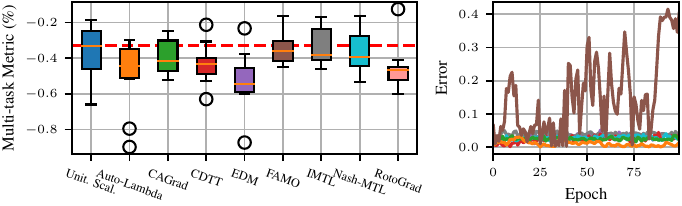}
        \label{fig:CL_CR}
    \end{subfigure}
    
    \vspace{-1.0em}
    
    \begin{subfigure}[b]{0.8\textwidth}
        \centering
        \includegraphics[width=\textwidth]{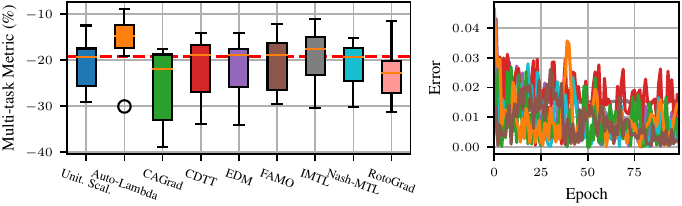}
        \label{fig:RL_RR}
    \end{subfigure}
    
    \caption{Box plot of $\Delta_{\text{mtm}}$ (\textbf{left}) for all SMTOs not evaluated on \cite{kurin2022defense} and mean error to equal weights (\textbf{right}). The top plot corresponds to CL\_CR, and the bottom one represents RL\_RR. All SMTOs had similar performance to Unit. Scal. and the computed weights varied near 0.5, excluding FAMO on the CL\_CR problem, exemplifying how similar equal weights are to the selected SMTOs' solution in this setting.}
    \label{fig:two_similar_tasks}
\end{figure}

The results presented in Figure~\ref{fig:two_similar_tasks} show that all SMTOs performed closely to the Unit. Scal. baseline, even on the dual reconstruction case where $\Delta_{mtm} \approx -20\%$, meaning that there was high interference between tasks. We can also observe that the learned weights from the SMTOs predominantly clustered around 0.5, besides FAMO, which was a notable exception, exhibiting instability on the dual classification problem, likely attributable to the small scale of the loss values.

Consequently, employing equal weights approximates the average behavior of the SMTOs, resulting in comparable performance. This observation suggests that a straightforward averaging of task gradients can serve as a surprisingly effective baseline without significant performance degradation in scenarios with similar task characteristics, such as the dual translation setting examined in the recent critique \cite{xin2022current}.

\subsection{Complexity analysis}
\label{sec:complexity}

Cityscapes \cite{Cordts2016Cityscapes} is a street scene dataset containing high-resolution images 2048x1024 and labels for semantic segmentation, disparity estimation, and instance segmentation. It is commonly adopted for MTL \cite{kendall2017multi, sener2018multi, kurin2022defense, navon2022nashmulti, xin2022current, liu2023famo} under two variations, the two-task problem with 7 class semantic segmentation and disparity estimation, and the three-task problem that uses 19 semantic classes and adds instance segmentation. Some similarities arise when comparing the first version to the Multi-MNIST classification-reconstruction problem. Both problems generate gradients with different norms, and the tasks differ in complexity, yet there is no significant difference between SMTOs and Unit. Scal. with optimized hyperparameters \cite{kurin2022defense, xin2022current}.

\begin{figure}[htbp]
    \centering
    \hfill
    \begin{subfigure}[b]{0.415\textwidth}
        \centering
        \includegraphics[width=\textwidth]{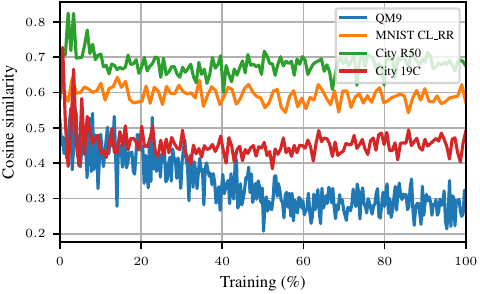}
        \subcaption{Mean cosine similarity.}
        \label{fig:grad_cos_sim}
    \end{subfigure}
    \hfill
    \begin{subfigure}[b]{0.415\textwidth}
        \centering
        \includegraphics[width=\textwidth]{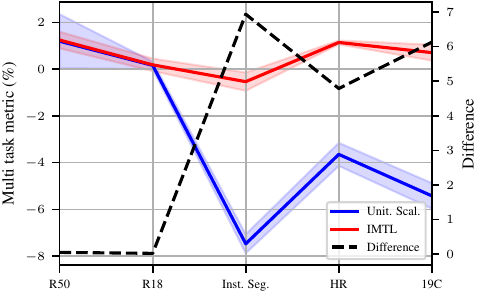}
        \subcaption{IMTL and Unit. Scal. performance.}
        \label{fig:city_complexity}
    \end{subfigure}
    \hfill
    \caption{\textbf{(a)} Mean cosine similarity between each task gradient and the average direction. City R50 exhibits significantly higher similarity than Multi-MNIST, potentially explaining Unit. Scal.'s comparable performance to SMTOs. Conversely, the lower similarity for City 19C and QM9 correlates with Unit. Scal.'s relative underperformance. See Appendix~\ref{app:cos_sim} for the full version with all configurations. \textbf{(b)} Shows the relative performance of Unit. Scal. and IMTL. Adding a third task and increasing the amount of information needed by increasing the number of classes are the relevant steps that dictate the relative performance in this problem.}
\end{figure}

The reason for this unexpected behavior is related to the lower amount of interference between each task compared to the Multi-MNIST problem, as seen in Figure~\ref{fig:grad_cos_sim}. Figure~\ref{fig:grad_cos_sim} shows the mean cosine similarity between each task gradient and the average direction, which can be seen as a measure of interference, as it correctly ranks the problems with respect to the relative performance between Unit. Scal. and SMTOs. Even after significantly reducing the parameter count in step R18, the level of interference remains about the same (see Appendix~\ref{app:cos_sim}), as reflected in the relative performance between Unit. Scal. and IMTL. However, after increasing the complexity of the problem, there is a significant difference between the two methods. 

To vary the complexity in a controlled manner, we vary model size, number of tasks, and information available to learn to identify which characteristic is relevant. We first trained the model on the same initial two-task configuration of \cite{kurin2022defense} and applied small cumulative steps until the most complex configuration. We opted to test only IMTL \cite{liu2021towards} and compare it to Unit. Scal. in the transitional steps as it was one of the best performing SMTOs, and it does not require any specific hyperparameter optimization. All Cityscapes configurations are optimized varying the learning rate at $lr$ $\in$ \{0.005, 0.001, 0.0005, 0.0001\} and the weight decay $\in$ \{$10^{-4}$, $10^{-5}$, 0.0\} with 32 batch size. Each configuration is trained once, and then the best in the validation dataset is trained a total of 3 times. The selected steps are:

\begin{itemize}
    \item \textbf{Reference (R50):} A Resnet-50 \cite{he2016deepresidual} is used as shared encoder and the DeepLab decoder \cite{chen2018encoder_decoder_atrous_conv} is used as task specific decoder. The input resolution is 256x128, and the objective is set to 7 class segmentation and disparity estimation. This is the same configuration of \citet{kurin2022defense} and very similar to the one of \citet{xin2022current};
    \item \textbf{Resnet-18 as decoder (R18):} Opting for a smaller architecture should encourage competition for parameters;
    \item \textbf{Added instance segmentation (Inst. Seg.):} Adding more tasks increases the complexity of the balancing problem, as well as the amount of information available to learn;
    \item \textbf{Increased resolution (HR):} The resolution is increased from 256x128 to 512x256 as an intermediate step to allow better learning of objects from smaller classes;
    \item  \textbf{19 semantic classes (19C):} By increasing the number of learnable classes and keeping the model size fixed, the amount of information needed by the semantic decoder is increased, resulting in more competition for the shared encoder parameters and feature space.
\end{itemize}

The results in Figure~\ref{fig:city_complexity} reveal that the benefit of employing an SMTO only becomes apparent upon the inclusion of the instance segmentation task. Furthermore, a substantial relative performance gain is observed after incorporating all 19 classes, suggesting that the volume of information significantly impacts the efficacy of SMTOs. This observation aligns with prior research on information transfer \cite{Wu2020Understanding}, which highlights the necessity of carefully calibrating the shared model's capacity in multi-task learning. Insufficient or excessive capacity can lead to negative or negligible transfer. Therefore, by increasing the number of tasks and classes, we inherently increase the demands on the shared feature space, potentially increasing inter-task conflicts.

Similar behavior was also noticed in the NLP field \cite{shaham2022interferenceMLT}, where they compared the interference caused by different languages and data size, concluding that a bigger model could mitigate the interference issue, though their evaluation was restricted to Unit. Scal.. On the other hand, we investigate the effect of those parameters on the relative performance of different SMTOs to the baseline.

This exemplifies that SMTOs are not needed on some occasions when the problem is simple enough that the conflicting gradients do not interfere with the final performance. Therefore, the best way to evaluate SMTOs is to use the most complex configuration to measure their capabilities in dealing with significant interference. Accordingly, the other promising SMTOs are trained and evaluated with the same procedures on the last configuration.

Figure~\ref{fig:city_box_plot} shows that SMTOs can indeed achieve better results when compared to Unit. Scal. in more complex balancing problems. This conclusion differs from Unit. Scal. \cite{kurin2022defense} primarily because of the increased complexity. Furthermore, we evaluate some of the newer SMTOs, showing the field's progress in the last years.

\begin{figure}
    \centering
    \includegraphics[width=0.85\linewidth]{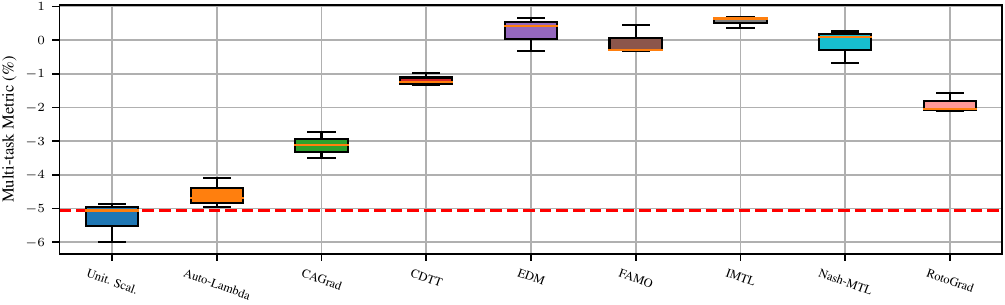}
    \caption{Muiti task metric on Cityscapes dataset. Most SMTOs significantly surpass the Unit. Scal. baseline on the most complex version of the Cityscapes MTL problem.}
    \label{fig:city_box_plot}
\end{figure}

\subsection{QM9}
Finally, the last dataset we use to evaluate SMTOs is the QM9 dataset \cite{wu2018qm9}. The QM9 dataset is a collection of quantum chemistry data for small organic molecules. It contains around 130,000 data points, and it was recently adopted as a problem for the MTL benchmarks \cite{navon2022nashmulti, liu2023famo}.

\begin{wraptable}{r}{0.432\textwidth}
\caption{Multi-task metric on QM9 dataset. Almost all SMTOs significantly surpass Unit. Scal.}
\begin{tabular}{lcc}
\toprule
SMTO & $\Delta_{MTM}$(\%) $\uparrow$ & MR $\downarrow$\\

\midrule
Unit. Scal.  & -135.7 & 4.45 \\
Auto-Lambda & -186.9 & 5.82 \\
CAGrad & -70.13 & 3.82 \\
CDTT & -121.2 & 5.45 \\
EDM & -95.42 & 4.00 \\
FAMO & -56.86 & 2.36 \\
Nash-MTL & \textbf{-53.62} & \textbf{2.09} \\
\bottomrule
\end{tabular}
\label{tab:qm9}
\end{wraptable}

We use the same configuration from \cite{navon2022nashmulti}. The example from PyTorch Geometric \cite{fey2019pytorchGeo} was adapted, and we used the model from \cite{gilmer2017messagepassing}. The model was used to predict 11 properties from the molecules. We use the commonly adopted distribution of 110K molecules for training, 10K for validation, and 10K for testing, with a batch size of 120. The learning rate was optimized in $lr$ $\in$ \{$5\times10^{-3}$, $10^{-3}$, $5\times10^{-4}$\}, and the ground truth values were normalized. We opted not to apply regularization because Unit. Scal. did not show significant improvement, and omitting this step reduced the overall computational cost.

Comparing the results from Table~\ref{tab:qm9} to the previous experiments, it is evident that this is the problem with the most amount of interference between tasks, with all SMTOs obtaining results more than 50\% worse on average compared to the single task problem. This also correlates with the lower cosine similarity shown in Figure~\ref{fig:grad_cos_sim}.

During the experiment, IMTL was too unstable, and most runs crashed due to numerical overflow, as happened on the RL benchmark from Unit. Scal. \cite{kurin2022defense}, which also had a higher number of tasks. We agree with the hypothesis that this was caused by a lack of bounds on the scaling coefficients obtained from the IMTL's optimization problem. RotoGrad also failed to converge, and our hypothesis is that RotoGrad was not able to find a proper rotation that minimizes the gradient conflict probably due to the increased difficulty of the problem. Further analysis can be seen in Appendix~\ref{app:rotograd}. Both were removed from Table~\ref{tab:qm9}.

\subsection{Fixed weights}
\label{sec:fixed_weights}

Though fixed weights are the most efficient and straightforward solution, choosing the correct values for the weights of each task is a challenging problem. \citet{xin2022current} finds the ideal, or close to ideal, weights by performing a grid search. This process may be viable for a more limited scenario, but for a large number of tasks it is not feasible. Very recently, \citet{royer2024scalarization_mt_at_scale} proposed a population-based training to find the ideal weights efficiently, though this method was not evaluated in this work due to time constraints.

One recent critique of SMTOs raises the question of whether optimized scalar weights can compete with SMTOs \cite{xin2022current}. However, as discussed previously, the problems used in the critique might not adequately represent the performance of SMTOs, so we evaluate this claim on our experiments.

Finding well optimized weights for all problems using grid search would be unfeasible, so instead, we opt for a simpler strategy, using the weights output from an SMTO to weight each task loss. More specifically, the weights from CAGrad, EDM and Nash-MTL optimizers are used for the MNIST, Cityscapes and QM9 datasets, respectively. As we are using the Adam optimizer, there is a normalizing effect on the gradients, so the gradients in each decoder are not severely affected by the different norms caused by the loss weighting.

To select the weights, we computed the mean value in each epoch, applied an exponential moving average with $\beta=0.9$, and used the result from the last epoch to define the weights. To extract and train with the fixed weights, we used the optimal hyperparameter configuration from the respective SMTO. However, for the QM9 dataset the fixed weights demonstrated to be unstable so we used the ones from $lr=0.0001$, instead of the optimal $lr=0.001$. The model is trained the same number of times as its SMTO counterpart.

\begin{wrapfigure}{r}{0.45\textwidth}
    \centering
    \includegraphics[width=0.95\linewidth]{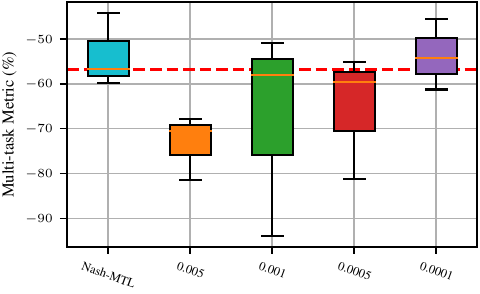}
    \caption{Box plot comparing $\Delta_{\text{mtm}}$ between the original Nash-MTL optimizer and their extracted weights from each learning configuration. Fixed weights can achieve results comparable to those of an SMTO. However, they can be significantly more unstable than a dynamic SMTO, as was the case for the weights from $lr=0.001$.}
    \label{fig:QM9_fw_box_plot}
\end{wrapfigure}

The results obtained from fixed weights, summarized in Table~\ref{tab:fixed_weights}, indicate that it is indeed possible to get competitive performance using fixed weights. However, fixed weights can be more unstable, as shown in Figure~\ref{fig:QM9_fw_box_plot} by the fixed weights extracted from $lr=0.001$, with results ranging from $\approx-50\%$ to $\approx-95\%$. Surprisingly, the optimal learning rate weights were not the best option for the QM9 dataset.

\begin{table}
\centering
\caption{Comparison between Unit. Scal, fixed weights, and the best performing SMTO. Well chosen fixed weights can achieve comparable performance to SMTOs.}
\begin{tabular}{*{4}{c}}
\toprule
\multirow{2}{*}{Dataset} & Unit. Scal. & Fixed Weights & Best SMTO \\
& \multicolumn{3}{c}{$\Delta_{mtm} (\%) \uparrow$} \\
\midrule
MNIST & -8.952 & -3.251 & -2.137 \\
City 19C & -5.312 & 0.145 & 0.570 \\
QM9 & -135.7 & -53.66 & -53.62 \\
\bottomrule
\end{tabular}
\label{tab:fixed_weights}
\end{table}

This raises the question of why fixed weights can achieve comparable performance to SMTOs, as generating dynamic weights is a major benefit of SMTOs. Analyzing the weight value behavior in Figure~\ref{fig:fixed_weights}, it is clear that all weights converge to a fixed value later in training. Additionally, due to the over-parameterized nature of deep learning models \cite{frankle2018the_lottery_ticket} the limitations typically associated with under-parameterized settings, such as the claim that scalarization is generally incapable of tracing out the Pareto front, do not necessarily apply \cite{hu2023revisiting}. We believe that further exploring the interaction between over-parameterization and multi-task learning could be a promising direction for future work.

This shows that using fixed weights is a valid alternative for complex MTL problems, corroborating the findings of \citet{xin2022current}. However, finding those weights is highly costly, and with the advancement of loss-based SMTOs such as FAMO \cite{liu2023famo} and GO4Align \cite{shen2024go4align}, it is no longer a desirable choice.

\begin{figure*}[t] 
    \centering
    \hfill
    \begin{subfigure}{0.3\linewidth}
        \centering
        \includegraphics[width=\linewidth]{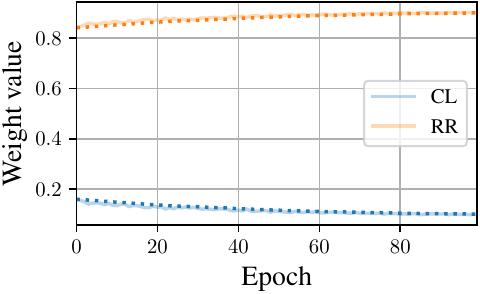}
        \label{fig:top_left}
    \end{subfigure}
    \hfill
    \begin{subfigure}{0.3\linewidth}
        \centering
        \includegraphics[width=\linewidth]{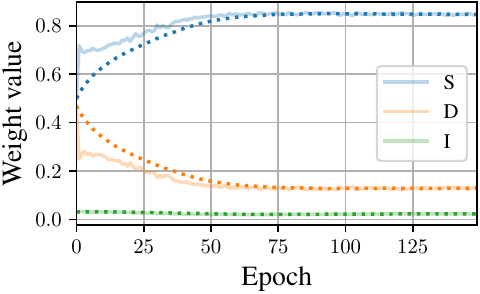}
        \label{fig:top_right}
    \end{subfigure}
    \hfill
    \vspace{-1em} 
    \begin{subfigure}{0.98\linewidth}
        \centering
        \includegraphics[width=\linewidth]{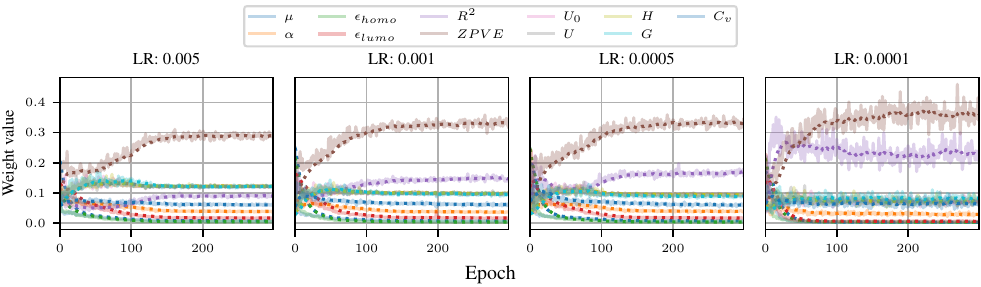}
        \label{fig:bottom}
    \end{subfigure}
    
    \caption{Normalized weights from SMTOs. The shaded regions represent the mean value at each epoch, and the dotted lines show the results of the exponential moving average. The \textbf{top left}, \textbf{top right}, and \textbf{bottom} plots correspond to the MNIST, Cityscapes, and QM9 datasets, respectively. All datasets exhibit similar behavior, with the weights gradually converging to a stable value on average.}
    \label{fig:fixed_weights}
\end{figure*}

\section{Limitations}
\label{sec:limitations}
In this work, we extensively tested multiple SMTOs on multiple configurations and datasets. However, it is important to note that we only focused on the supervised setting and did not include problems from NLP or reinforcement learning, fields in which MTL is commonly evaluated \cite{kurin2022defense, xin2022current}. Also, our initial selection process involved filtering certain SMTOs based on their norm balancing performance on the MNIST classification-reconstruction problem. Despite this strong preliminary evidence, a more exhaustive analysis of the excluded methods may reveal unknown advantages. Furthermore, some SMTOs were reported to provide better results when combining them with other SMTOs \cite{javaloy2021rotograd, liu2022autolambda}, but because of time constraints, we limited the experiments to only the vanilla version. Finally, we only considered the MTL problem, where all tasks are equally weighted, and did not consider scenarios designed for auxiliary tasks.

\section{Conclusion}

This work aimed to address recent critiques of SMTOs by conducting a comprehensive empirical analysis under the context of supervised learning. First, we examined why Unit. Scal. can replicate the performance of SMTOs in two-task scenarios, particularly when the tasks are similar. Our findings suggest that SMTOs converge to equal weights, so Unit. Scal. approximates their behavior on average. Furthermore, we explored the conditions under which Unit. Scal. matches SMTOs, attributing its success to factors such as the degree of task interference, the number of tasks, and the amount of information to be learned.

We also investigated the relative performance of fixed weights compared to SMTOs. By deriving fixed weights from SMTOs, we demonstrated that fixed weights can perform competitively in complex scenarios. However, using fixed weights is less appealing due to the computational overhead of identifying optimal values and their comparable performance to recent loss-based SMTOs.

In conclusion, this study provides a robust evaluation of current SMTOs, underscoring their significance in addressing complex multi-task problems. Additionally, we aim to establish points to consider when evaluating SMTOs and to guide the development of future methods and applications.

\section*{Acknowledgments}

This work was partially funded by CNPq under grants 465755/2014-3, 309532/2023-0, and 130961/2024-8, and CAPES under finance code 001.

\bibliographystyle{unsrtnat}
\bibliography{neurips_2025}

\appendix

\section{Technical Appendices and Supplementary Material}

\subsection{Experimental Setup}
\label{app:hardware}
All experiments were implemented using PyTorch~\cite{NEURIPS2019_bdbca288_pytorch} and executed on a machine equipped with an AMD Ryzen 9 5950X CPU and two NVIDIA RTX 3090 GPUs (24\,GB each). However, experiments were run independently on each GPU. The random seed was set based on the corresponding run index.

\subsection{Additional results}

\subsubsection{Cosine similarity}
\label{app:cos_sim}

Figure~\ref{fig:cos_sim_all} demonstrates the strong correlation between the mean cosine similarity of task gradients and the relative performance of Unit. Scal. and SMTOs. Notably, the tasks where Unit. Scal. achieved comparable results to SMTOs (MNIST CL\_CR, MNIST RL\_RR, City R50, and City R18) exhibit high mean cosine similarities ($\approx$0.7) and form a distinct cluster. Conversely, the more challenging Cityscapes tasks (City Inst. Seg., City HR, City 19C) show a lower, yet consistent, similarity ($\approx$0.45). Finally, the most difficult task from the QM9 dataset, is characterized by the lowest mean cosine similarity ($\approx$0.3).

\begin{figure}
    \centering
    \includegraphics[width=\linewidth]{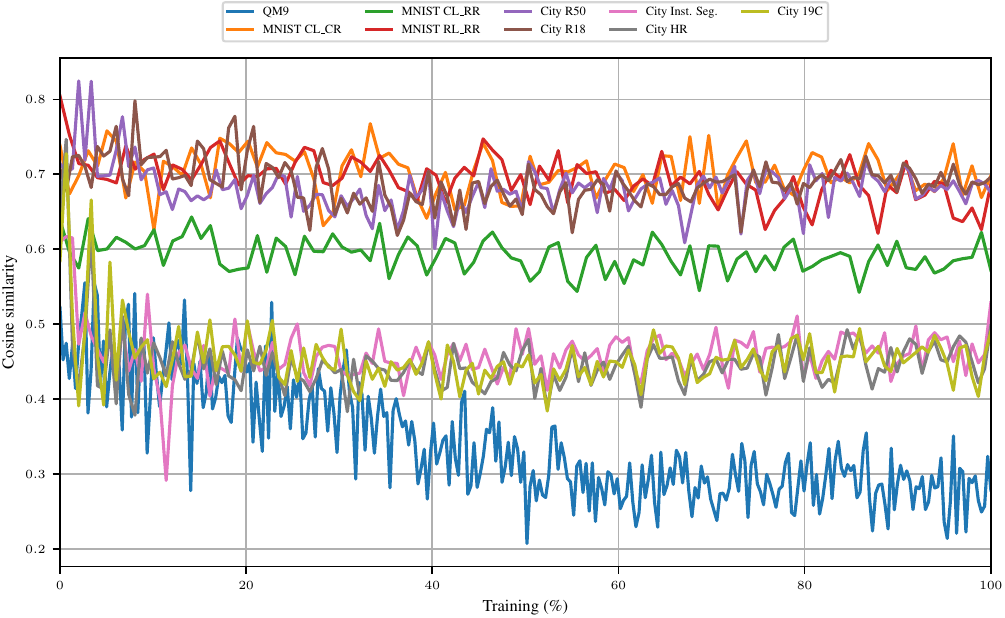}
    \caption{Mean cosine similarity between each task gradient and the average direction for all problems. It was able to correctly rank all the different problems in terms of relative performance between Unit. Scal. and SMTOs. (i.e. higher cosine similarity relates to lower difference between Unit. Scal. and SMTOs.)}
    \label{fig:cos_sim_all}
\end{figure}

\subsubsection{Statistics}
\label{app:add_statistics}
Similarly to \cite{javaloy2021rotograd}, we provide additional statistics in Tables~\ref{tab:app_mnist},~\ref{tab:app_city}, and~\ref{tab:app_qm9_stat} and the missing visual result in Figure~\ref{fig:qm9_box_plot}. Most notably, we see a clear preference for the classification task on the MNIST dataset from the underperforming SMTOs, as well as a lower minimum value and higher standard deviation in Table~\ref{tab:app_qm9_stat} for Unit. Scal. when compared to the top performing SMTOs, FAMO and Nash-MTL.

\begin{table}
\centering
\caption{Mean $\Delta_{mtm} (\%) \uparrow$ of different SMTOs on the MNIST dataset.}
\begin{tabular}{lccc}
\toprule
\multirow{2}{*}{SMTO} & \multirow{2}{*}{CL} & \multirow{2}{*}{RR} & \multirow{2}{*}{avg} \\
 & & & \\
\midrule
Unit. Scal. & -1.043 & -16.86 & -8.952 \\
Auto-Lambda & -1.232 & -7.186 & -4.209 \\
CAGrad & -1.685 & -2.590 & \textbf{-2.137} \\
CDTT & -1.546 & -3.596 & -2.571 \\
EDM & -1.613 & -2.993 & -2.303 \\
FAMO & -4.587 & \textbf{-0.745} & -2.666 \\
GradDrop & -1.076 & -22.04 & -11.56 \\
IMTL & -2.513 & -1.975 & -2.244 \\
MGDA-UB & -1.448 & -11.26 & -6.352 \\
Nash-MTL & -1.695 & -4.430 & -3.063 \\
PCGrad & -1.039 & -15.36 & -8.201 \\
RLW-Dirichlet & -1.145 & -19.48 & -10.31 \\
RLW-Normal & -1.100 & -18.53 & -9.817 \\
RotoGrad & -2.705 & -1.572 & -2.139 \\
SI & \textbf{-0.962} & -27.47 & -14.21 \\
UW & -1.109 & -26.30 & -13.70 \\
\bottomrule
\end{tabular}
\label{tab:app_mnist}
\end{table}

\begin{table}
\centering
\caption{Mean $\Delta_{mtm} (\%) \uparrow$ of different SMTOs on the Cityscapes dataset.}
\begin{tabular}{lcccccc}
\toprule
\multirow{2}{*}{SMTO} & \multicolumn{2}{c}{Segmentation} & \multicolumn{2}{c}{Disparity} & Instance Seg. & \multirow{2}{*}{avg} \\
\cmidrule(lr){2-3}\cmidrule(lr){4-5}\cmidrule(lr){6-6}
 & Acc & mIoU & L1 abs & L1 rel & L1 abs & \\
\midrule
Unit. Scal.  & -2.120 & -16.78 & -8.542 & -11.41 & 3.492 & -5.312 \\
Auto-Lambda  & -1.922 & -15.02 & -8.546 & -9.317 & \textbf{3.657} & -4.582 \\
CAGrad  & -0.977 & -7.647 & -6.709 & -10.06 & 3.325 & -3.124 \\
CDTT  & -0.948 & -8.954 & 2.043 & -0.224 & 0.472 & -1.190 \\
EDM  & -0.225 & -3.174 & 0.990 & -1.051 & 2.468 & 0.246 \\
FAMO  & -0.274 & -3.268 & 0.755 & -2.212 & 2.339 & -0.054 \\
IMTL & \textbf{-0.224} & \textbf{-2.759} & \textbf{3.017} & \textbf{0.993} & 1.195 & \textbf{0.570} \\
Nash-MTL & -0.290 & -3.275 & 0.297 & -0.227 & 1.409 & -0.113 \\
RotoGrad & -0.453 & -4.484 & 0.303 & -3.493 & -1.659 & -1.908\\
\bottomrule
\end{tabular}
\label{tab:app_city}
\end{table}

\begin{table}
\centering
\caption{Performance of different SMTOs on the QM9 dataset based on $\Delta_{mtm} (\%)$ statistics.}
\begin{tabular}{lccccc}
\toprule
SMTO &  min  $\uparrow$ & max $\uparrow$ & med $\uparrow$ & std $\downarrow$ & avg $\uparrow$ \\
\midrule
Unit. Scal. & -849.3 & -8.280 & -61.62 & 229.4 & -135.7 \\
Auto-Lambda & -974.5 & -18.32 & -115.4 & 255.5 & -186.9 \\
CAGrad & -408.0 & -5.451 & -32.08 & 108.7 & -70.13 \\
CDTT & -652.9 & -30.90 & -70.34 & 169.7 & -121.2 \\
EDM & -526.0 & -24.14 & -56.26 & 137.0 & -95.42 \\
FAMO & \textbf{-351.5} & 6.590 & \textbf{-16.95} & \textbf{97.78} & -56.86 \\
Nash-MTL & -390.2 & \textbf{15.10} & -21.41 & 108.8 & \textbf{-53.62} \\
\bottomrule
\end{tabular}
\label{tab:app_qm9_stat}
\end{table}

\begin{figure}[ht]
    \centering
    \includegraphics[width=1.0\linewidth]{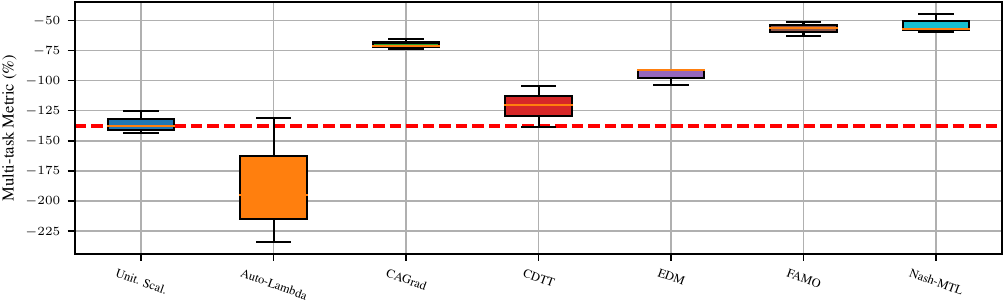}
    \caption{Box plot comparing $\Delta_{mtm}$ between Unit. Scal. and SMTOs on the QM9 dataset.}
    \label{fig:qm9_box_plot}
\end{figure}

\subsection{Rotograd's instability on the QM9 dataset}
\label{app:rotograd}

The RotoGrad optimizer aims to mitigate conflicts in gradient directions by rotating the feature space so that the gradient of each task's gradient aligns with the direction formed by the equally weighted sum of the normalized gradients: $v = \frac{1}{N}\sum_i^N u_{sh,i}$, where $u_{sh,i} = \frac{g_{sh,i}}{\|g_{sh,i}\|}$ represents the normalized gradient of task $i$ for the shared parameters.

In the specific case of the QM9 experiment, RotoGrad failed to converge. We hypothesize that the relatively small final shared feature space, combined with the high number of tasks, significantly increased the difficulty of the balancing problem. As shown in Figure~\ref{fig:rotation_loss}, the rotation loss $\mathcal{L}_{rot}$ is orders of magnitude higher compared to the MNIST and Cityscapes problems. Since $\mathcal{L}_{rot}$ does not directly enforce a reduction in task loss, this instability may have negatively affected the model optimization.

For this specific experiment, we used $lr=0.001$ as it was the optimal learning rate for most SMTOs, and $lrr=0.1$ since it was the smallest value used on the grid search, so ideally it should be the most stable.

\begin{figure}[ht]
    \centering
    \includegraphics[width=0.95\linewidth]{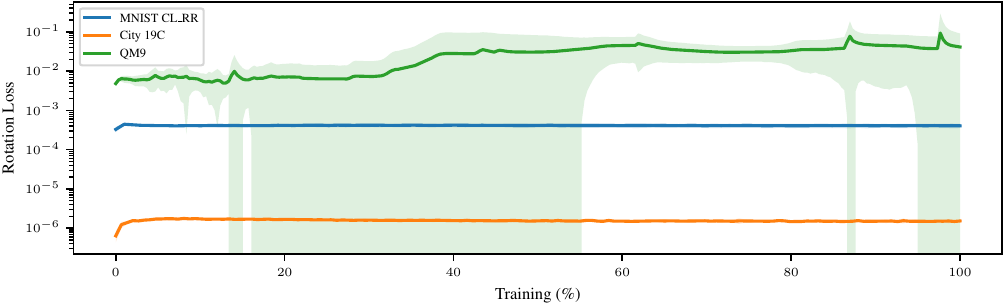}
    \caption{Mean rotation loss $\left(\mathcal{L}_{rot}\right)$ from RotoGrad. The $\mathcal{L}_{rot}$ is at least 10$\times$ higher than the other cases, which could explain the training instability.}
    \label{fig:rotation_loss}
\end{figure}

\subsection{Proofs}
\label{app:proofs}

\begin{assumption}
\label{assp:two-task-scenario-grad-norm}
Consider a model $f_{\theta}$ composed of a shared encoder $f_{\theta_{sh}}$ and task-specific modules $f_{\theta_t}$ for each task $t \in \{1, \dots, N\}$. In the case of two tasks ($N=2$), we assume the following symmetry conditions: \textbf{(i)} both tasks use the same loss function, \textbf{(ii)} the task-specific modules have the same architecture, \textbf{(iii)} the tasks are of similar difficulty, and \textbf{(iv)} the tasks have labels with similar distributions. Under these conditions, we assume that the gradients with respect to the shared parameters have equal norm, i.e., $\|g_1\| = \|g_2\|$.
\end{assumption}

\begin{assumption}
\label{assp:two-task-scenario-loss-value}
Under the same conditions as Assumption~\ref{assp:two-task-scenario-grad-norm}, assume that the loss values of each task are equal, i.e., $\ell_1 = \ell_2$.
\end{assumption}

Although Assumptions~\ref{assp:two-task-scenario-grad-norm},~\ref{assp:two-task-scenario-loss-value} impose a strong symmetry requirement, our experiments on MNIST revealed that in both the dual classification and dual reconstruction tasks, the resulting weights from all SMTOs were consistently close to 0.5, with the sole exception of FAMO in the dual classification scenario.

\begin{proposition}
\label{prop:cagrad-equal-weights}
Under Assumption~\ref{assp:two-task-scenario-grad-norm}, the optimal weights computed by CAGrad correspond to equal weighting.
\end{proposition}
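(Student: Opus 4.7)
The plan is to exploit the symmetry of the CAGrad dual objective under the swap $w_1 \leftrightarrow w_2$ whenever $\|g_1\|=\|g_2\|$. Recall that CAGrad solves the primal $\max_{d}\min_{i}\langle g_i,d\rangle$ subject to $\|d-g_0\|\le c\|g_0\|$, where $g_0=(g_1+g_2)/2$ and $c\in[0,1)$. Standard duality rewrites this as the inner problem
\begin{equation*}
\min_{w\in\Delta_2} F(w),\qquad F(w) \;=\; g_w^{\!\top} g_0 \;+\; c\,\|g_0\|\,\|g_w\|,\qquad g_w = w_1 g_1 + w_2 g_2,
\end{equation*}
and the returned task weights are precisely the minimizer $w^\star$. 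So the goal reduces to showing $w^\star=(1/2,1/2)$ is optimal under Assumption~\ref{assp:two-task-scenario-grad-norm}.

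First I would verify the swap symmetry $F(w_1,w_2)=F(w_2,w_1)$. Using $g_i^{\!\top} g_0 = \tfrac12\|g_i\|^2 + \tfrac12 g_1^{\!\top} g_2$, the linear term satisfies $g_1^{\!\top} g_0 = g_2^{\!\top} g_0$ exactly when $\|g_1\|=\|g_2\|$, so $w_1 g_1^{\!\top} g_0 + w_2 g_2^{\!\top} g_0$ is invariant under the swap. For the norm term, expanding $\|g_w\|^2 = w_1^2\|g_1\|^2 + 2 w_1 w_2\, g_1^{\!\top} g_2 + w_2^2\|g_2\|^2$ shows that interchanging $w_1,w_2$ only permutes the coefficients of $\|g_1\|^2$ and $\|g_2\|^2$, which is again invariant precisely under Assumption~\ref{assp:two-task-scenario-grad-norm}. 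Hence $F$ is symmetric.

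Next I would invoke convexity: $F$ is a sum of a linear form in $w$ and the composition of the (convex) Euclidean norm with a linear map, so $F$ is convex on the simplex $\Delta_2$. Therefore its set of minimizers $W^\star$ is convex. Combined with the swap symmetry, $W^\star$ is a convex set closed under the involution $(w_1,w_2)\mapsto(w_2,w_1)$, which forces it to contain the unique fixed point of that involution on $\Delta_2$, namely $(1/2,1/2)$. This yields equal weights as an optimal CAGrad solution.

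The main obstacle I anticipate is bookkeeping around the exact dual CAGrad actually solves in the reference implementation (sign conventions, normalization of $g_0$, and the simplex parametrization), since a mismatched form could break the clean swap argument. A minor subtlety is non-uniqueness: when $g_1$ and $g_2$ are collinear, $F$ is only weakly convex and $W^\star$ can be a line segment, but the symmetric centroid argument still identifies $(1/2,1/2)\in W^\star$, so the conclusion is unaffected.
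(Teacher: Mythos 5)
Your proof is correct, and it takes a genuinely different route from the paper's. The paper starts from the same dual objective $g_w^\top g_0 + c\,\|g_0\|\,\|g_w\|$, but its key move is to expand the linear term and show that under equal gradient norms $g_w^\top g_0 = \tfrac{1}{2}\left(\|g_1\|^2 + g_1^\top g_2\right)$ is \emph{constant} on the simplex (using $w_1+w_2=1$), so the problem collapses to $\min_{w}\|w_1 g_1 + w_2 g_2\|$, whose minimizer is then asserted to be $(\tfrac12,\tfrac12)$ when the norms are equal; the paper then goes further and computes the resulting update direction $d^* = (1+c)\,g_0$ explicitly. You instead keep the full objective and argue via swap symmetry plus convexity: the nonempty convex minimizer set is invariant under $(w_1,w_2)\mapsto(w_2,w_1)$, and the midpoint of any minimizer and its swap lies on the diagonal, so $(\tfrac12,\tfrac12)$ is optimal. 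Your argument is more self-contained (it does not defer to a ``known'' solution of the reduced norm-minimization subproblem) and it handles degeneracy explicitly, whereas the paper implicitly assumes the minimizer is unique. What the paper's reduction buys in exchange is the closed form for the final CAGrad direction, which makes the equal-weight combination of the update transparent. One small refinement to your caveat: with $\|g_1\|=\|g_2\|$, restricting $\|g_w\|^2 = A^2 - 2w_1w_2\,(A^2 - g_1^\top g_2)$ to the simplex is strictly convex unless $g_1^\top g_2 = A^2$, i.e.\ unless $g_1 = g_2$ exactly; for $g_1 = -g_2$ the minimizer is still unique. So the non-uniqueness you flag occurs only when the two gradients coincide, in which case the weights are immaterial anyway.
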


\begin{proof}
We consider the CAGrad optimization for two tasks:
\[
w^* = \mathop{\arg\min}_{w \in \mathcal{W}} \; g_w^\top g_0 + \sqrt{\phi} \, \|g_w\|,
\]
where:
\begin{align*}
g_w &= w_1 g_1 + w_2 g_2, \\
g_0 &= \frac{1}{2}(g_1 + g_2), \\
\phi &= c^2 \|g_0\|^2, \quad c \in [0,1), \\
\mathcal{W} &= \{w \in \mathbb{R}^2 \mid w_1 + w_2 = 1,\; w_1, w_2 \geq 0\}.
\end{align*}
We also define:
\begin{align*}
    A &\coloneqq \|g_1\| = \|g_2\|, \\
    p &\coloneqq g_1^\top g_2.
\end{align*}
We begin by expanding the inner product:
\begin{align*}
g_w^\top g_0 &= \left(w_1 g_1 + w_2 g_2\right)^\top \frac{1}{2}(g_1 + g_2) \\
&= \frac{1}{2} \left[w_1 (g_1^\top g_1 + g_1^\top g_2) + w_2 (g_2^\top g_1 + g_2^\top g_2)\right] \\
&= \frac{1}{2} \left[w_1 (A^2 + p) + w_2 (p + A^2)\right] \\
&= \frac{1}{2}(A^2 + p)(w_1 + w_2) = \frac{1}{2}(A^2 + p).
\end{align*}
This term is independent of $w$, as well as $\sqrt{\phi}$, so the optimization reduces to:
\[
\mathop{\arg\min}_{w \in \mathcal{W}} \|w_1 g_1 + w_2 g_2\|.
\]
Let $w^*$ be the optimal solution, which is known to be equal weights if $\|g_1\| = \|g_2\|$. The CAGrad update direction is then given by:
\[
d^* = g_0 + \frac{\sqrt{\phi}}{\|g_{w^*}\|} \, g_{w^*}.
\]
If $w^* = (\tfrac{1}{2}, \tfrac{1}{2})$, then $g_{w^*} = \frac{1}{2}(g_1 + g_2) = g_0$, and:
\[
d^* = g_0 + \frac{c \|g_0\|}{\|g_0\|} \, g_0 = (1 + c) \, g_0 = \sum_{i=1}^{2} g_i \left(\frac{1 + c}{2}\right).
\]
Thus, $d^*$ is an equal-weight combination of the gradients.
\end{proof}

\begin{proposition}
Under Assumption~\ref{assp:two-task-scenario-grad-norm}, the optimal weights computed by Nash-MTL \cite{navon2022nashmulti} are equal weights in the two-task scenario.
\end{proposition}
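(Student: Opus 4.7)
The plan is to leverage the Nash-MTL characterization (from \cite{navon2022nashmulti}): the update direction takes the form $d^* = G\alpha^*$ with $G = [g_1, g_2]$ and $\alpha^* \in \mathbb{R}^2_{++}$ satisfying the stationarity condition $G^\top G \alpha^* = 1/\alpha^*$ componentwise, where the reciprocal is taken elementwise. This arises from the KKT conditions of maximizing the Nash bargaining objective $\sum_i \log(g_i^\top d)$ over a unit ball, and I would quote it directly rather than rederive it.

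Reusing the notation from the CAGrad proof, I would set $A \coloneqq \|g_1\| = \|g_2\|$ and $p \coloneqq g_1^\top g_2$, so that $G^\top G$ is the $2\times 2$ matrix with $A^2$ on the diagonal and $p$ off-diagonal. The symmetric ansatz $\alpha_1 = \alpha_2 = \alpha$ reduces both scalar equations to the single condition $(A^2 + p)\alpha^2 = 1$, yielding $\alpha = 1/\sqrt{A^2 + p}$ and hence $d^* \propto g_1 + g_2$, an equal-weight combination. For uniqueness, I would subtract the two fixed-point equations from one another to obtain
\[ (\alpha_1 - \alpha_2)\Bigl[(A^2 - p) + \tfrac{1}{\alpha_1 \alpha_2}\Bigr] = 0. \]
Since $\alpha_1, \alpha_2 > 0$ and Cauchy-Schwarz gives $A^2 \geq p$, the bracketed term is strictly positive, forcing $\alpha_1 = \alpha_2$ and pinning down the symmetric solution as the unique one.

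The main obstacle I anticipate is expository rather than technical: stating the KKT characterization of Nash-MTL compactly without rederiving its full construction, and briefly acknowledging the degenerate antiparallel case ($g_1 = -g_2$, i.e., $p = -A^2$) in which Nash-MTL is undefined because no common descent direction exists. The algebraic core of the argument mirrors the CAGrad proof and should be straightforward once the fixed-point characterization is in place.
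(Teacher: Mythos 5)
Your proposal is correct and follows essentially the same route as the paper: both start from the Nash-MTL fixed-point condition $G^\top G\alpha = 1/\alpha$, specialize to two tasks with $\|g_1\|=\|g_2\|$, and conclude $\alpha_1=\alpha_2$ by subtracting the two scalar equations. Your uniqueness step (isolating the factor $(A^2-p)+\tfrac{1}{\alpha_1\alpha_2}>0$ via Cauchy--Schwarz) is in fact slightly more explicit than the paper's appeal to symmetry, but it is the same argument in substance.
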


\begin{proof}
The weights are computed by approximating the solution of:

\[
    G^\top G\alpha = \frac{1}{\alpha},
\]
where $G$ is the matrix whose columns are the shared gradients and $\alpha_i \; > 0, \; i \in \{1, ..., N\}$ the tasks' weights. In a two-task scenario, the problem can be simplified to:
\[
\left\{
\begin{aligned}
\alpha_1 g_1^\top g_1 + \alpha_2g_2^\top g_1 &= \frac{1}{\alpha_1} \\
\alpha_2 g_2^\top g_2 + \alpha_1g_1^\top g_2 &= \frac{1}{\alpha_2}
\end{aligned}
\right.
\]

Defining $A \coloneqq \|g_1\| = \|g_2\|$ and $p \coloneqq \frac{g_1^\top g_2}{A^2}$:
\[
\left\{
\begin{aligned}
\alpha_1 A^2 + \alpha_2A^2p &= \frac{1}{\alpha_1} \\
\alpha_2 A^2 + \alpha_1A^2p &= \frac{1}{\alpha_2}
\end{aligned}
\right.
\]

Rearrange the equations:
\[
\left\{
\begin{aligned}
\alpha_1^2 A^2 + \alpha_1\alpha_2A^2p &= 1 \\
\alpha_2^2 A^2 + \alpha_1\alpha_2 A^2p &= 1
\end{aligned}
\right.
\]

Notice that by symmetry of the two equations, the solution should satisfy $\alpha_1^2=\alpha_2^2$. Since both weights are positive, this immediately implies:
\[
\alpha_1 = \alpha_2.
\]
\end{proof}

\begin{proposition}
Under Assumption~\ref{assp:two-task-scenario-grad-norm}, the optimal weights computed by IMTL-G \cite{liu2021towards} are equal weights in the two-task scenario.
\end{proposition}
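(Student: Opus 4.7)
The plan is to invoke the defining condition of IMTL-G: the aggregated gradient $g = \alpha_1 g_1 + \alpha_2 g_2$ should have equal projections onto the unit vectors $u_1 = g_1/\|g_1\|$ and $u_2 = g_2/\|g_2\|$ (i.e., $g^\top u_1 = g^\top u_2$), subject to the simplex constraint $\alpha_1 + \alpha_2 = 1$ with $\alpha_i \geq 0$. I would first write this equal-projection constraint as a scalar equation in $\alpha_1, \alpha_2$, then substitute the symmetry hypothesis $\|g_1\| = \|g_2\| = A$ and show that the resulting expression factors in a way that forces $\alpha_1 = \alpha_2$.

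Concretely, expanding $g^\top u_1 = g^\top u_2$ yields
\[
\alpha_1 A + \alpha_2 \frac{g_1^\top g_2}{A} \;=\; \alpha_1 \frac{g_1^\top g_2}{A} + \alpha_2 A,
\]
which, after rearranging and multiplying through by $A$, collapses to
\[
(\alpha_1 - \alpha_2)\bigl(A^2 - g_1^\top g_2\bigr) = 0.
\]
Combined with $\alpha_1 + \alpha_2 = 1$, this immediately gives $\alpha_1 = \alpha_2 = 1/2$ whenever $g_1^\top g_2 \neq A^2$, matching the structure already used in the CAGrad and Nash-MTL proofs above.

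The only subtlety is the degenerate case $g_1^\top g_2 = A^2$, which by the Cauchy-Schwarz equality condition forces $g_1 = g_2$. In that case the IMTL-G projection constraint is trivially satisfied by every point of the simplex, so equal weights remain a valid (and natural) choice, and the aggregated gradient is the same irrespective of the selected $\alpha$. I do not expect a real obstacle here: once the IMTL-G condition is stated correctly, the algebra is a one-line factoring, and the degeneracy resolves itself. The main care needed is simply to state the IMTL-G criterion precisely in its projection-equalization form before carrying out the substitution.
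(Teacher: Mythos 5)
Your proposal is correct and takes essentially the same route as the paper: the paper plugs into IMTL-G's closed-form solution $\alpha_2 = g_1 U^\top (DU^\top)^{-1}$, which is exactly the solution of the projection-equalization condition $g^\top u_1 = g^\top u_2$ you start from, and both arguments reduce to the same cancellation $\alpha_2 = \frac{A^2 - p}{2(A^2 - p)} = \tfrac{1}{2}$. The one point where you go slightly beyond the paper is the degenerate case $g_1^\top g_2 = A^2$ (i.e.\ $g_1 = g_2$), where the paper's formula becomes $0/0$ and your Cauchy--Schwarz remark cleanly resolves the indeterminacy.
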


\begin{proof}
The weights are obtained by solving:
\[
    \alpha = g_1 U^\top (DU^\top)^{-1},
\]
where $\alpha = \{\alpha_2, ..., \alpha_N\}$ is constrained to $\sum_i \alpha_i = 1$, $U^\top = [u^\top_1 - u^\top_2, ..., u^\top_1 - u^\top_N]$, with $u_i = g_i/\|g_i\|$, and $D^\top = [g_1^\top - g_2^\top, ..., g_1^\top - g_N^\top]$. Define $A \coloneqq \|g_1\| = \|g_2\|$ and $p = g_1^\top g_2$. In the two-task scenario, the problem reduces to:
\begin{align*}
    \alpha_2 &= g_1 \left(\frac{g_1^\top - g_2^\top }{A}\right)\left((g_1-g_2) \frac{(g_1^\top-g_2^\top)}{A}\right) ^ {-1} \\
    \alpha_2 &= g_1 \left(\frac{g_1^\top - g_2^\top }{A}\right) \left( \frac{\|g_1 - g_2\|^2}{A} \right) ^ {-1} \\
    \alpha_2 &= g_1 \left( \frac{g_1^\top - g_2^\top}{\|g_1 - g_2\|^2}\right) = \frac{A^2 - p}{2(A^2 - p)} = \frac{1}{2}.
\end{align*}

Since $\alpha_1 + \alpha_2 = 1$, this completes the proof.

\end{proof}

\begin{proposition}
\label{prop:edm}
Under assumption~\ref{assp:two-task-scenario-grad-norm}, the optimal weights computed by EDM \cite{katrutsa2020follow} are equal weights in the two-task scenario.
\end{proposition}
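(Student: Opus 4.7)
The plan is to follow the same symmetry-based template used for CAGrad, Nash-MTL and IMTL-G in the preceding propositions. EDM (``follow the bisector'') is defined so that the update direction makes equal angles (in the inner-product sense) with every task gradient; in the two-task case it produces a convex combination $d = \alpha_1 g_1 + \alpha_2 g_2$ on the simplex satisfying the equal-descent condition $d^\top u_1 = d^\top u_2$, where $u_i = g_i / \|g_i\|$. First, I would import the notation $A \coloneqq \|g_1\| = \|g_2\|$ and $p \coloneqq g_1^\top g_2$ from the earlier proofs. Because $\|g_1\| = \|g_2\|$, the normalized condition $d^\top u_1 = d^\top u_2$ coincides, after scaling by $A$, with $d^\top g_1 = d^\top g_2$; expanding yields $\alpha_1 A^2 + \alpha_2 p = \alpha_1 p + \alpha_2 A^2$, which factors as $(\alpha_1 - \alpha_2)(A^2 - p) = 0$.

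Second, I would invoke Cauchy--Schwarz to note $A^2 - p \ge 0$, with equality only when $g_1 = g_2$. In the generic case $A^2 - p > 0$ the factorization forces $\alpha_1 = \alpha_2$, and combined with the simplex constraint $\alpha_1 + \alpha_2 = 1$ this gives $\alpha_1 = \alpha_2 = 1/2$. In the degenerate case $g_1 = g_2$ every convex combination produces the same direction, so equal weights is trivially optimal as well, and the bisector direction reduces to $g_1$ itself, which equals the equal-weight combination.

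The main obstacle I anticipate is confirming that the exact construction in \cite{katrutsa2020follow} matches the equal-cosine characterization above and admits a unique minimizer under Assumption~\ref{assp:two-task-scenario-grad-norm}; if the formulation in the paper is stated as a different variational problem (e.g., minimizing some scalar of the residual gradients subject to the simplex constraint), the algebraic step above may need to be re-derived from whichever KKT system EDM actually solves. The robust fallback I would rely on is a pure symmetry argument: under $\|g_1\| = \|g_2\|$ the EDM objective and constraints are invariant under the simultaneous swap $(g_1, g_2) \leftrightarrow (g_2, g_1)$ and $(\alpha_1, \alpha_2) \leftrightarrow (\alpha_2, \alpha_1)$, so any unique minimizer must be fixed by this swap and therefore satisfy $\alpha_1 = \alpha_2 = 1/2$. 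This symmetry route is essentially independent of the precise formulation and is what I would fall back on if the direct computation became ambiguous.
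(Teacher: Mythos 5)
Your argument is correct, and it reaches the same conclusion as the paper by a genuinely different (more self-contained) route. The paper's proof is essentially a one-line substitution: it quotes the closed-form two-task solution from the EDM paper, $d^* = \left(\tfrac{1}{\|g_1\|}+\tfrac{1}{\|g_2\|}\right)^{-1}\left(\tfrac{g_1}{\|g_1\|}+\tfrac{g_2}{\|g_2\|}\right)$, i.e.\ weights proportional to $1/\|g_i\|$, and sets $\|g_1\|=\|g_2\|=A$ to obtain $d^*=\tfrac{1}{2}(g_1+g_2)$. You instead re-derive the weights from EDM's defining equiangular condition $d^\top u_1 = d^\top u_2$ on the simplex, obtaining $(\alpha_1-\alpha_2)(A^2-p)=0$ and hence $\alpha_1=\alpha_2=\tfrac12$ whenever $g_1\neq g_2$; this is consistent with the quoted closed form (your general KKT-style computation in fact yields $\alpha_i\propto 1/\|g_i\|$, matching it exactly), and your worry about whether the characterization matches \cite{katrutsa2020follow} is unfounded — EDM is precisely the equiangular/bisector construction. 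What your route buys is independence from citing the closed form, an explicit treatment of the degenerate case $g_1=g_2$ (which the paper glosses over), and a robust swap-symmetry fallback that would apply to any formulation with a unique minimizer; what the paper's route buys is brevity. No gaps.
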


\begin{proof}
EDM's original paper \cite{katrutsa2020follow} already showed that for the two task scenario the optimal direction is computed as follows:
\[
d^* = \left( \frac{1}{\|g_1\|} + \frac{1}{\|g_2\|} \right)^{-1} \left( \frac{g_1}{\|g_1\|} + \frac{g_2}{\|g_2\|} \right).
\]
Define $A \coloneqq \|g_1\| = \|g_2\|$:
\[
d^* = \left( \frac{1}{A} + \frac{1}{A} \right)^{-1} \left( \frac{g_1}{A} + \frac{g_2}{A} \right) = \frac{1}{2} (g_1 + g_2).  \\
\]

\end{proof}

\begin{proposition}
Under Assumptions~\ref{assp:two-task-scenario-grad-norm} and~\ref{assp:two-task-scenario-loss-value}, the optimal weights computed by CDTT \cite{nakamura2022leveraging} are equal weights.
\end{proposition}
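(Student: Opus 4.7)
The plan is to mirror the symmetry-based strategy used in Propositions~\ref{prop:cagrad-equal-weights} and~\ref{prop:edm}: instantiate the CDTT weighting rule from \cite{nakamura2022leveraging} for $N=2$, substitute the two symmetries $\|g_1\|=\|g_2\|=:A$ and $\ell_1=\ell_2=:\ell$, and exploit the resulting invariance under the permutation of task indices. The reason Assumption~\ref{assp:two-task-scenario-loss-value} is required here (and not in the previous three propositions) is that CDTT uses both the current loss values and the gradient information when setting its task weights; neither datum alone determines the output.

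Concretely, I would write CDTT's rule in the form $\alpha_i = F_i(\ell_1,\ell_2,g_1,g_2)$ and verify that it is equivariant under the simultaneous swap of task indices, i.e.\ $F_2(\ell_2,\ell_1,g_2,g_1) = F_1(\ell_1,\ell_2,g_1,g_2)$. Under Assumptions~\ref{assp:two-task-scenario-grad-norm} and~\ref{assp:two-task-scenario-loss-value}, the inputs $(\ell_1,\ell_2,\|g_1\|,\|g_2\|)$ are fixed by the swap, and the only remaining gradient quantity, $g_1^\top g_2$, is intrinsically symmetric. Substituting these equalities into $F_1$ and $F_2$ therefore yields identical values, forcing $\alpha_1=\alpha_2$; the normalization constraint $\alpha_1+\alpha_2=1$ then pins the common value to $1/2$.

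The step I expect to be the main obstacle is the bookkeeping component of CDTT: the method maintains auxiliary quantities (running averages of per-task loss progress and adaptive temperature/scaling parameters estimated during training), and one must show these coincide across the two tasks under the symmetry assumptions. If one unrolls the training dynamics from a symmetric initialization, Assumptions~\ref{assp:two-task-scenario-grad-norm} and~\ref{assp:two-task-scenario-loss-value} propagate through each update, so the per-task state variables remain equal throughout. Once this invariance is established, the final substitution reduces to the same kind of routine algebra carried out in the proof of Proposition~\ref{prop:edm}.
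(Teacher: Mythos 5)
Your overall strategy---permutation symmetry of the update rule plus an inductive argument that the per-task state stays symmetric over iterations---is the same one the paper uses, and your identification of why Assumption~\ref{assp:two-task-scenario-loss-value} is needed here (the loss enters through the tension coefficient, via the $\log_{10}(\ell_i)$ term in $\delta_i$) is exactly right; your explicit treatment of the historical bookkeeping is in fact more careful than the paper's one-line assertion that $c_1=c_2$. The one place your sketch misdescribes the method is the closing step: CDTT does not output weights subject to a constraint $\alpha_1+\alpha_2=1$, so there is no normalization to ``pin the common value to $1/2$.'' CDTT outputs a descent direction $d_n^* = d^* + \sum_i c_i\,(g_i-d^*)/\|g_i-d^*\|$, where $d^*$ is EDM's direction. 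The paper's proof closes the argument concretely: symmetry gives $c_1=c_2=c$ and $\|g_1-d^*\|=\|g_2-d^*\|=B$ (the latter because $d^*$ makes equal angles with $g_1$ and $g_2$ and the norms are equal), so the tension term collapses to $\tfrac{c}{B}(g_1+g_2-2d^*)$, which vanishes identically because Proposition~\ref{prop:edm} gives $d^*=\tfrac{1}{2}(g_1+g_2)$; hence $d_n^*=d^*$ exactly. Your equivariance argument does still deliver equal coefficients on $g_1$ and $g_2$ (which is the substance of the claim), so this is a cosmetic rather than fatal defect, but a full write-up should replace the normalization step with the cancellation above.
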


\begin{proof}
CDTT applies a tension vector to the optimal direction $d^*$ computed by EDM, which is equal weights in this case (see Proposition~\ref{prop:edm}). The final vector is computed by:
\[
\begin{aligned}
    \zeta_i &= \frac{1}{N} \sum_{i=1}^N \|g_i(t-i)\| \\
    \delta_i &= \frac{\|\zeta_i(t)\|}{\|\zeta_i(t-1)\|} + \log_{10}(\ell_i)\\
    c_i &= \frac{\alpha}{1 + e^{(-\delta_ie + e)}} + 1 - \alpha\\
    d_n^* &= d^* + \sum^N_{i=1} c_i\left(\frac{g_i - d^*}{\|g_i - d^*\|}\right),
\end{aligned}
\]
where $t$ refers to the iteration index and $\alpha \in [0, 1]$ is a constant to regulate the tension factor's sensitivity.

It follows directly from Assumptions~\ref{assp:two-task-scenario-grad-norm},~\ref{assp:two-task-scenario-loss-value} and the component's equations that $c = c_1 = c_2$. Analyzing the $\|g_i - d^*\|$ separately:
\[
\|g_i - d^*\|^2 = \|g_i\|^2 + \|d^*\|^2 - 2 \langle g_i, d^*\rangle.
\]

The direction computed by EDM $d^*$ is guaranteed to have the same angle with each of the shared gradients \cite{katrutsa2020follow}. Therefore, if $\|g_1\| =\|g_2\|, \;\text{then} \; \langle g_1, d^*\rangle = \langle g_2, d^*\rangle$. Then we can consider $B \coloneqq \|g_1 - d^*\| = \|g_2 - d^*\|$ and remove it together with $c$ from the sum:
\[
\begin{aligned}
d_n^* &= d^* + \frac{c}{B}\sum^2_{i=1} \left(g_i - d^*\right) \\
d_n^* &= d^* + \frac{c}{B}(g_1 + g_2 - 2d^*) \\
d_n^* &= d^* = \frac{1}{2}(g_1 + g_2).
\end{aligned}
\]

\end{proof}

\begin{proposition} Under Assumption~\ref{assp:two-task-scenario-loss-value} (extended for N tasks), if the logits \(\xi_t = (\xi_{1,t}, \dots, \xi_{N,t})\) are initialized equally, then the FAMO algorithm maintains equal task weights:
\[
w_t = w_{i,t} = c_t \frac{z_{i, t}}{\ell_{i, t}},
\]
\textnormal{where} $c_t = \left(\sum_{i=1}^N \dfrac{z_{i, t}}{\ell_{i, t}}\right)$, \textnormal{for all} $i = 1,\dots,N$  \textnormal{and for all} $t$.
\end{proposition}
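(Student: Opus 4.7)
The plan is to proceed by induction on the iteration index $t$, showing that symmetry of the logits and of the task losses is preserved at every step, which by the weight formula immediately propagates to the task weights. Observe that FAMO's weight rule $w_{i,t} = c_t z_{i,t}/\ell_{i,t}$ depends on task $i$ only through the softmax output $z_{i,t}$ (determined by the logits $\xi_{i,t}$) and the loss value $\ell_{i,t}$; if both are independent of $i$ at iteration $t$, then so is $w_{i,t}$.

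For the base case $t=0$, equal initialization $\xi_{i,0} = \xi_{0}$ yields $z_{i,0} = 1/N$ for every $i$, and the extension of Assumption~\ref{assp:two-task-scenario-loss-value} to $N$ tasks gives $\ell_{i,0} = \ell_0$. Substituting into the weight formula, all $w_{i,0}$ coincide and the claim holds.

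For the inductive step, assume $\xi_{i,t}$, $\ell_{i,t}$, and hence $w_{i,t}$, are identical across $i$. Two updates take place between $t$ and $t+1$: the model parameters are updated with the aggregated gradient $\sum_i w_{i,t} g_{i,t}$, and the FAMO logits are updated by a gradient step on its log-loss objective (roughly of the form $\sum_i z_{i,t}\log(\ell_{i,t-1}/\ell_{i,t})$). Because all $w_{i,t}$ are equal, the shared-parameter update is symmetric in the tasks; combined with the extension of Assumption~\ref{assp:two-task-scenario-grad-norm} (identical task-specific architectures, matching difficulty, similar label distributions, shared loss function) and symmetric initialization of the task heads, this preserves $\ell_{i,t+1} = \ell_{t+1}$ across $i$. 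The FAMO logit update, being a symmetric function of equal inputs on both the logit and loss sides, produces an update vector with identical entries, so $\xi_{i,t+1}$ remains equal across $i$. Applying the weight formula at $t+1$ closes the induction.

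The main obstacle will be making the symmetry of the FAMO logit update fully explicit. This requires unpacking the specific update expression used in the FAMO algorithm and verifying componentwise that equal logits and equal losses yield a gradient with equal entries; the softmax's invariance under permutation of equal inputs is the key algebraic fact. A secondary subtlety is the step that concludes $\ell_{i,t+1} = \ell_{t+1}$: it relies on the implicit assumption that the task-specific modules start from identical (or symmetrically initialized) parameters so that, given equal weights and the extended symmetry conditions, their trajectories remain identical throughout training. This should be stated explicitly as part of the hypothesis to make the argument airtight.
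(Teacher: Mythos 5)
Your proposal matches the paper's proof essentially step for step: induction on $t$, the base case via the softmax of equal logits giving $z_{i,0}=1/N$, and the inductive step via the permutation symmetry of the softmax Jacobian at the symmetric point combined with equal log-loss differences, so the logit update has identical entries. The one place you go slightly beyond the paper — trying to derive $\ell_{i,t+1}=\ell_{t+1}$ from symmetric parameter updates rather than taking it as part of the (extended) loss-equality assumption as the paper does — is a reasonable refinement, and your observation that it would require explicitly assuming symmetric initialization of the task heads is correct.
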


\begin{proof}
We prove the result by induction on the iteration \(t\).

\textbf{Base Case (\(t=0\)):}  
Assume that $\xi_{i,0} = c$ for all $i$, for some constant $c$.
Then the softmax yields
\[
z_{i,0} = \frac{\exp(\xi_{i,0})}{\sum_{j=1}^{N} \exp(\xi_{j,0})} = \frac{\exp(c)}{N\exp(c)} = \frac{1}{N}.
\]

The model parameters are updated as follows:
\[
\theta_{1} = \theta_0 - \alpha \sum_{i=1}^N \left(c_0\frac{z_{i,0}}{\ell_{i,0}}\right) \nabla \ell_{i,0}, 
\]
where $c_0 = \left(\sum_{i=1}^N \dfrac{z_{i,0}}{\ell_{i,0}}\right)^{-1}$.

Under the assumption \(\ell_{i,t} = \ell_t\) for all \(i\), each term \(\frac{z_{i,t}}{\ell_{i,t}}\) is identical across tasks. Therefore, this update does not introduce any asymmetry among tasks. Thus, the base case holds.

\medskip

\textbf{Inductive Step:}  
Suppose that at iteration \(t\) we have
\[
\xi_t = (c_t, c_t, \dots, c_t),
\]
which implies
\[
z_{i,t} = \frac{\exp(c_t)}{N \exp(c_t)} = \frac{1}{N} \quad \text{for all } i.
\]

Next, the logits are updated via
\[
\xi_{t+1} = \xi_t - \beta \left( \delta_t + \gamma \, \xi_t \right),
\]
with
\[
\delta_t = 
\begin{bmatrix}
\nabla^\top z_{1,t}(\xi_t) \\
\vdots \\
\nabla^\top z_{n,t}(\xi_t)
\end{bmatrix}
\begin{bmatrix}
\log \ell_{1,t} - \log \ell_{1,t+1} \\
\vdots \\
\log \ell_{n,t} - \log \ell_{n,t+1}
\end{bmatrix}.
\]
Since the losses satisfy \(\ell_{i,t} = \ell_t\) for every \(i\) (and similarly for \(t+1\) by the symmetry of the update), the differences \(\log \ell_{i,t} - \log \ell_{i,t+1}\) are the same for all tasks. 

Moreover, the softmax function is given by
\[
z_{i,t} = \frac{\exp(\xi_{i,t})}{\sum_{j=1}^N \exp(\xi_{j,t})}.
\]
The derivative of \(z_{i,t}\) with respect to \(\xi_j\) is
\[
\frac{\partial z_i}{\partial \xi_j} = z_i \left(\delta_{ij} - z_j\right).
\]
where $\delta_{ij}$ is the Kronecker delta, defined as $\delta_{ij} = 1$ if $i=j$ and 0 otherwise. At the symmetric point \(\xi_t = (c_t, \dots, c_t)\), we have \(z_i = \frac{1}{N}\) for all \(i\), hence
\[
\frac{\partial z_i}{\partial \xi_j} = \frac{1}{N}\left(\delta_{ij} - \frac{1}{N}\right).
\]
This shows that the Jacobian of the softmax is the same for every coordinate, ensuring that the scalar products involved in the definition of \(\delta_t\) yield the same value for every task. Let us denote this common value by \(D\), i.e.,
\[
[\delta_t]_i = D \quad \text{for all } i.
\]
Since the regularization term \(\gamma \, \xi_t\) is also applied uniformly, the update for each coordinate becomes
\[
\xi_{i,t+1} = c_t - \beta (D + \gamma c_t) \quad \text{for all } i.
\]
Thus, \(\xi_{t+1} = (c_{t+1}, c_{t+1}, \ldots, c_{t+1})\) remains symmetric. Evidently, the induction holds for the specific case $N=2$.

\end{proof}

\subsection{Dataset Usage and Licensing}
\label{app:dataset_license}
\begin{itemize}
    \item \textbf{MNIST}: Creative Commons Attribution-Share Alike 3.0
    \item \textbf{Cityscapes}: Custom, allows for academic and non-commercial use, \url{https://www.cityscapes-dataset.com/license/}
    \item \textbf{QM9}: We were not able to find a license for this dataset. However the GitHub repository that manages the dataset is under an MIT license \url{https://github.com/pyg-team/pytorch_geometric}.
\end{itemize}

\subsection{Hyperparameters}
\label{app:hyper}

Tables~\ref{tab:mnist_CL_CR_hyper},~\ref{tab:mnist_CL_RR_hyper},~\ref{tab:mnist_RL_RR_hyper},~\ref{tab:city_hyper}, and~\ref{tab:QM9_hyper} summarize the best hyperparameters for each SMTO in each scenario. Also, it shows the best SMTO specific parameters in bold. In the MNIST configuration, these parameters were searched in a broader range to better understand their impact on optimization and limit the search space in the more complex dataset. To determine which specific hyperparameters of each SMTO to optimize, we selected those explicitly optimized in the respective articles. Consequently, we kept Auto-Lambda's initial weight, FAMO's learning rate for task logits, Nash-MTL's optimizer iterations, and GradDrop's parameters fixed throughout the experiments. Additionally, for the Cityscapes dataset, RotoGrad's feature size was set to the maximum possible value or capped at 512, the same used for the ResNet18 model trained on the original Rotograd paper.

For the hyperparameters that were optimized without a defined range, such as Auto-Lambda's and RotoGrad's auxiliary learning rate, we opted to define it as a scale of the learning rate ($aux\_lr = lr*lr\_scale$). 

For the specific case of the CDTT optimizer in the QM9 dataset, we opted to use the same optimal learning rate from EDM and only optimized specific hyperparameters due to time constraints. We justify this choice because CDTT was proposed as a direct improvement from EDM. 

\begin{table}[ht]
    \small
    \centering
    \caption{Hyperparameters for MNIST - CL \& CR.}
    \begin{tabular}{cclcccc}
        \toprule
        \multirow{2}{*}{Dataset} & \multirow{2}{*}{\centering Tasks} & \multirow{2}{*}{\centering SMTO} & \multicolumn{3}{c}{Hyperparameters} & \multirow{2}{*}{\centering SMTO specific parameters} \\
        \cmidrule{4-6}
        & & & LR & P & WD & \\
        \midrule
        MNIST & \makecell{CL \\ CR} & Single Task & \makecell{0.0025 \\ 0.005} & \makecell{0.5 \\ 0.3} & \makecell{0 \\ 0} & \\ 
        \midrule
        & & Unit. Scal. & 0.0025 & 0.3 & 0 & \\
        & & Auto-Lambda & 0.005 & 0.3 & 0 & \makecell{auxiliary learning rate scale: \\ \{1000, 100, 10 1.0 \textbf{0.1} 0.01 0.001\} \\ initial weight: \textbf{0.1}}\\
        & & CAGrad & 0.0025 & 0.3 & 0 & $c$: \{\textbf{0.2}, 0.5, 0.8\} \\
        & & CDTT & 0.005 & 0.2 & 0 & $\alpha$: \{0.2, 0.4, \textbf{0.6}, 0.8, 1.0\} \\
        & & EDM & 0.005 & 0.2 & 0 & \\
        & & FAMO & 0.0025 & 0.2 & 0 & \makecell{$\gamma$: \{0.01, \textbf{0.001}, 0.0001\} \\ learning rate of the task logits: \textbf{0.025}} \\
        & & IMTL & 0.0025 & 0.3 & 0 & \\
        & & Nash-MTL & 0.0025 & 0.3 & 0 & optimizer iterations: \textbf{20}\\
        & & RotoGrad & 0.005 & 0.3 & 0 & \makecell{auxiliary learning rate scale: \\ \{5.0 1.0 0.5 \textbf{0.1}\} \\ feature size: \textbf{50}}\\
        \bottomrule
    \end{tabular}
    \label{tab:mnist_CL_CR_hyper}
\end{table}

\begin{table}[ht]
    \small
    \centering
    \caption{Hyperparameters for MNIST - CL \& RR.}
    \begin{tabular}{cclcccc}
        \toprule
        \multirow{2}{*}{\centering Dataset} & \multirow{2}{*}{\centering Tasks} & \multirow{2}{*}{\centering SMTO} & \multicolumn{3}{c}{Hyperparameters} & \multirow{2}{*}{\centering SMTO specific parameters} \\
        \cmidrule{4-6}
        & & & LR & P & WD & \\
        \midrule
        MNIST & \makecell{CL \\ RR} & Single Task & \makecell{0.0025 \\ 0.0025} & \makecell{0.5 \\ 0.0} & \makecell{0 \\ 0} & \\
        \midrule
        & & Unit. Scal. & 0.001 & 0 & 0 & \\
        & & Auto-Lambda & 0.001 & 0 & 0 & \makecell{auxiliary learning rate scale: \\ \{1000 \textbf{100} 10 1 0.1 0.01 0.001\} \\ initial weight: \textbf{0.1}} \\
        & & CAGrad & 0.001 & 0 & 0 & $c$: \{0.2, 0.5, \textbf{0.8}\} \\
        & & CDTT & 0.0025 & 0 & 0 & $\alpha$: \{\textbf{0.2}, 0.4, 0.6, 0.8, 1.0\} \\
        & & EDM & 0.001 & 0 & 0 & \\
        & & FAMO & 0.0025 & 0 & 0 & \makecell{$\gamma$: \{0.01, 0.001, \textbf{0.0001}\} \\ learning rate of the task logits: \textbf{0.025}} \\
        & & GradDrop & 0.001 & 0 & 0 & \makecell{k: \textbf{1.0} \\ p: \textbf{0.5}}\\
        & & IMTL & 0.001 & 0 & 0 & \\
        & & MGDA-UB & 0.001 & 0 & 0 & \\
        & & Nash-MTL & 0.001 & 0 & 0 & optimizer iterations: \textbf{20}\\
        & & PCGrad & 0.001 & 0 & 0 & \\
        & & RLW-Dirichlet & 0.001 & 0 & 0 & \\
        & & RLW-Normal & 0.001 & 0 & 0 & \\
        & & RotoGrad & 0.001 & 0 & 0 & \makecell{auxiliary learning rate scale: \\ \{5.0 1.0 0.5 \textbf{0.1}\} \\ feature size: \textbf{50}}\\
        & & SI & 0.001 & 0 & 0 & \\
        & & UW & 0.001 & 0 & 0 & \\
        \bottomrule
    \end{tabular}
    \label{tab:mnist_CL_RR_hyper}
\end{table}

\begin{table}[ht]
    \small
    \centering
    \caption{Hyperparameters for MNIST - RL \& RR.}
    \begin{tabular}{cclcccc}
        \toprule
        \multirow{2}{*}{\centering Dataset} & \multirow{2}{*}{\centering Tasks} & \multirow{2}{*}{SMTO} & \multicolumn{3}{c}{Hyperparameters} & \multirow{2}{*}{\centering SMTO specific parameters} \\
        \cmidrule{4-6}
        & & & LR & P & WD & \\
        \midrule
         MNIST & \makecell{RL \\ RR} & Single Task & \makecell{0.0025 \\ 0.0025} & 0 & 0 & \\
         \midrule
        & & Unit. Scal. & 0.001 & 0 & 0 & \\
        & & Auto-Lambda & 0.001 & 0 & 0 & \makecell{auxiliary learning rate scale: \\ \{1000 \textbf{100} 10 1 0.01 0.001 0.0001\} \\ initial weight: \textbf{0.1}} \\
        & & CAGrad & 0.0025 & 0 & 0 & $c$:  \{0.2, \textbf{0.5}, 0.8\} \\
        & & CDTT & 0.001 & 0 & 0 & $\alpha$: \{0.2, 0.4, 0.6, 0.8, \textbf{1.0}\} \\
        & & EDM & 0.001 & 0 & 0 & \\
        & & FAMO & 0.001 & 0 & 0 & \makecell{$\gamma$: \{0.01, 0.001, \textbf{0.0001}\} \\ learning rate of the task logits: \textbf{0.025}} \\
        & & IMTL & 0.001 & 0 & 0 & \\
        & & Nash-MTL & 0.001 & 0 & 0 & optimizer iterations: \textbf{20}\\
        & & RotoGrad & 0.001 & 0 & 0 & \makecell{auxiliary learning rate scale: \\ \{5.0 1.0 0.5 \textbf{0.1}\} \\ feature size: \textbf{50}} \\
        \bottomrule
        \end{tabular}
    \label{tab:mnist_RL_RR_hyper}
\end{table}

\begin{table}[ht]
    \small
    \centering
    \caption{Hyperparameters for Cityscapes.}
    \begin{tabular}{cclcccc}
        \toprule
        \multirow{2}{*}{\centering Dataset} & \multirow{2}{*}{\centering Tasks} & \multirow{2}{*}{\centering SMTO} & \multicolumn{3}{c}{Hyperparameters} & \multirow{2}{*}{\centering SMTO specific parameters} \\
        \cmidrule{4-6}
        & & & LR & P & WD & \\
        \midrule
        City. & \makecell{S \\ \\ D \\ \\ I} & Single Task & \makecell{ACC: 0.001 \\ mIoU: 0.001 \\ L1 abs: 0.001 \\ L1 rel: 0.0005 \\ L1 abs: 0.001} & \makecell{0 \\ 0 \\ 0 \\ 0 \\ 0} & \makecell{$1e-4$ \\ $1e-4$ \\ $1e-4$ \\ $1e-4$ \\ $1e-5$} & \\
        \midrule
        & & Unit. Scal. & 0.001 & 0 & $1e-4$ & \\
        & & Auto-Lambda & 0.001 & 0 & $1e-4$ &  \makecell{auxiliary learning rate scale: \\ \{100 10 1 0.1 0.01 0.001 \textbf{0.0001}\} \\ initial weight: \textbf{0.1}} \\
        & & CAGrad & 0.001 & 0 & $1e-4$ & $c$: \{\textbf{0.2}, 0.5, 0.8\} \\
        & & CDTT & 0.001 & 0 & $1e-4$ & $\alpha$: \{\textbf{0.2}, 0.4, 0.6, 0.8, 1.0\} \\
        & & EDM & 0.001 & 0 & $1e-4$ & \\
        & & FAMO & 0.001 & 0 & $1e-4$ & \makecell{$\gamma$: \{\textbf{0.01}, 0.001, 0.0001\} \\ learning rate of the task logits: \textbf{0.025}} \\
        & & IMTL & 0.001 & 0 & $1e-4$ & \\
        & & Nash-MTL & 0.001 & 0 & $1e-4$ & optimizer iterations: \textbf{20}\\
        & & RotoGrad & 0.001 & 0 & 0 & \makecell{auxiliary learning rate scale: \\ \{5.0 \textbf{1.0} 0.5 0.1\} \\ feature size: \textbf{512}} \\
        \bottomrule
        \end{tabular}
    \label{tab:city_hyper}
\end{table}

\begin{table}[ht]
    \small
    \centering
    \caption{Hyperparameters for QM9.}
    \begin{tabular}{cclcccc}
        \toprule
        \multirow{2}{*}{\centering Dataset} & \multirow{2}{*}{\centering Tasks} & \multirow{2}{*}{\centering SMTO} & \multicolumn{3}{c}{Hyperparameters} & \multirow{2}{*}{\centering SMTO specific parameters} \\
        \cmidrule{4-6}
        & & & LR & P & WD & \\
        \midrule
        QM9 & \makecell{$C_v$ \\ $G$ \\ $H$ \\ $U_0$ \\ $U$ \\ $\alpha$ \\ $\epsilon_{homo}$ \\ $\epsilon_{lumo}$ \\ $\mu$ \\ $R^2$ \\ ZPVE} & Single Task & \makecell{0.001 \\ 0.001 \\ 0.001 \\ 0.0005 \\ 0.001 \\ 0.005 \\ 0.001 \\ 0.001 \\ 0.001 \\ 0.001 \\ 0.001} & \makecell{0 \\ 0 \\ 0 \\ 0 \\ 0 \\ 0 \\ 0 \\ 0 \\ 0 \\ 0 \\ 0} & \makecell{0 \\ 0 \\ 0 \\ 0 \\ 0 \\ 0 \\ 0 \\ 0 \\ 0 \\ 0 \\ 0} & \\
        \midrule
        & & Unit. Scal. & 0.005 & 0 & 0 & \\
        & & Auto-Lambda & 0.001 & 0 & 0 & \makecell{auxiliary learning rate scale: \\ \{100, \textbf{10}, 1.0\} \\ initial weight: \textbf{0.1}} \\
        & & CAGrad & 0.001 & 0 & 0 & $c$: \{0.2, 0.5, \textbf{0.8}\} \\
        & & CDTT & 0.001 & 0 & 0 & $\alpha$: \{0.2, 0.4, 0.6, 0.8, \textbf{1.0}\} \\
        & & EDM & 0.001 & 0 & 0 & \\
        & & FAMO & 0.005 & 0 & 0 & \makecell{$\gamma$: \{0.01, 0.001, \textbf{0.0001}\} \\ learning rate of the task logits: \textbf{0.025}} \\
        & & Nash-MTL & 0.001 & 0 & 0 & optimizer iterations: \textbf{20}\\
        \bottomrule
        \end{tabular}
    \label{tab:QM9_hyper}
\end{table}

\end{document}